\newtheorem{proposition}{Proposition}
\newtheorem{remark}{Remark}
\newtheorem{example}{Example}
\providecommand{\eref}[1]{\eqref{#1}}  % call \eqref from amstex
\providecommand{\cref}[1]{Chapter~\ref{#1}}
\providecommand{\fref}[1]{Figure~\ref{#1}}
\providecommand{\R}{\ensuremath{\mathbb{R}}}
\providecommand{\bydef}{\overset{\text{def}}{=}}
\renewcommand{\vec}[1]{\ensuremath{\boldsymbol{#1}}}
\providecommand{\mat}[1]{\ensuremath{\boldsymbol{#1}}}
\providecommand{\calD}{\mathcal{D}}
\providecommand{\calL}{\mathcal{L}}
\providecommand{\calN}{\mathcal{N}}
\providecommand{\mA}{\mat{A}}
\providecommand{\mC}{\mat{C}}
\providecommand{\mI}{\mat{I}}
\providecommand{\mK}{\mat{K}}
\providecommand{\mL}{\mat{L}}
\providecommand{\mR}{\mat{R}}
\providecommand{\mS}{\mat{S}}
\providecommand{\mU}{\mat{U}}
\providecommand{\mV}{\mat{V}}
\providecommand{\mW}{\mat{W}}
\providecommand{\vp}{\vec{p}}
\providecommand{\vq}{\vec{q}}
\providecommand{\vu}{\vec{u}}
\providecommand{\vv}{\vec{v}}
\providecommand{\vx}{\vec{x}}
\providecommand{\vy}{\vec{y}}
\providecommand{\mSigma}{\mat{\Sigma}}
\providecommand{\veta}{\vec{\eta}}
\providecommand{\vvtilde}{\boldsymbol{\widetilde{v}}}
\providecommand{\vxtilde}{\boldsymbol{\widetilde{x}}}
\providecommand{\vvhat}{\boldsymbol{\widehat{v}}}
\providecommand{\vubar}{\boldsymbol{\bar{u}}}
\newcommand{\subjectto}{\mathop{\mathrm{subject\, to}}}
\newcommand{\argmin}[1]{\mathop{\underset{#1}{\mbox{argmin}}}}
\newcommand{\minimize}[1]{\mathop{\underset{#1}{\mathrm{minimize}}}}
\newcommand{\diag}[1]{\mathop{\mathrm{diag}\left\{#1\right\}}}
\title{Algorithm-Induced Prior for Image Restoration}
\author{Stanley H. Chan,~\IEEEmembership{Member,~IEEE}
\thanks{The authors is with School of ECE and Dept of Statistics, Purdue University, West Lafayette, IN 47907. Email: stanleychan@purdue.edu.}}
\begin{document}
\maketitle
\begin{abstract}
This paper studies a type of image priors that are constructed implicitly through the alternating direction method of multiplier (ADMM) algorithm, called the algorithm-induced prior. Different from classical image priors which are defined before running the reconstruction algorithm, algorithm-induced priors are defined by the denoising procedure used to replace one of the two modules in the ADMM algorithm. Since such prior is not explicitly defined, analyzing the performance has been difficult in the past.

Focusing on the class of symmetric smoothing filters, this paper presents an explicit expression of the prior induced by the ADMM algorithm. The new prior is reminiscent to the conventional graph Laplacian but with stronger reconstruction performance. It can also be shown that the overall reconstruction has an efficient closed-form implementation if the associated symmetric smoothing filter is low rank. The results are validated with experiments on image inpainting.
\end{abstract}

\begin{keywords}
Image reconstruction, image denoising, graph Laplacian, ADMM, symmetric smoothing filters.
\end{keywords}

\section{Introduction}
\subsection{Background, Scope, and Related Work}
Alternating direction methods of multipliers (ADMM) is perhaps the most popular algorithm for solving linear inverse problems in recent years, particularly for image restoration \cite{Yang_Zhang_Yin_2009,Afonso_Bioucas_Figueiredo_2010}. Despite different perspectives of the algorithm, (e.g., operator splitting \cite{Eckstein_Bertsekas_1992}, proximal methods \cite{Parikh_Boyd_2014}, split Bregman \cite{Esser_2009}, to name a few,) the common principle behind ADMM is to convert the original minimization of the form
\begin{equation}
\minimize{\vx \in \R^n} \;\; f(\vx) + \lambda s(\vx)
\label{eq:main 1}
\end{equation}
into an equivalent constrained problem
\begin{equation}
\begin{array}{ll}
\minimize{\vx \in \R^n,\vv \in \R^n} &\;\; f(\vx) + \lambda s(\vv)\\
\subjectto &\;\; \vx = \vv,
\end{array}
\label{eq:main 2}
\end{equation}
and solve the constrained problem by alternatingly minimizing the augmented Lagrangian function. Under mild conditions, e.g., when $f(\cdot)$ is strongly convex and $s(\cdot)$ is convex, the convergence of the algorithm is typically guaranteed \cite{Boyd_Parikh_Chu_Peleato_Eckstein_2011}.

In setting up the optimization problem in \eref{eq:main 1}, the objective function $f(\cdot)$ and the regularization function $s(\cdot)$ are almost always fixed \emph{before} running the algorithm. For example, when solving a non-blind deblurring problem using a total variation regularization \cite{Chan_Khoshabeh_Gibson_2011}, the functions $f(\cdot)$ and $s(\cdot)$ are
\begin{equation}
f(\vx) = \|\mA\vx - \vy\|^2, \quad s(\vx) = \|\vx\|_{TV},
\label{eq:example1}
\end{equation}
where $\mA$ is the blur operator, $\vy$ is the observed image, and $\|\vx\|_{TV}$ is the total variation norm of the image $\vx$.

For most image restoration problems, $f(\vx)$ is chosen according to the forward imaging model, and is fixed as long as we agree with the forward model. But $s(\vx)$ is the user's subjective belief of how the solution should look like, a.k.a. the \emph{prior}. In literature, apart from the total variation prior mentioned in \eref{eq:example1}, there are enormous number of priors we can use. However, there is one thing in common, which is that $s(\vx)$ has to be defined before using the ADMM algorithm.

In this paper, I present an ADMM algorithm where the regularization function $s(\vx)$ is \emph{unknown} a-priori. At a first glance, this might seem unnatural because if $s(\vx)$ is unknown, then it is unclear about what we are trying to optimize in \eref{eq:main 1}. However, as will be discussed shortly, the ADMM algorithm can generally be written as two \emph{modules} -- an inverse module, and a denoising module. The idea is to replace the denoising module by some off-the-shelf image denoising algorithm, e.g., non-local means \cite{Buades_Coll_2005_Journal,Chan_Zickler_Lu_2014} or BM3D \cite{Dabov_Foi_Katkovnik_2007}. In other words, we do not explicitly define $s(\vx)$ before running the algorithm, but use a denoising algorithm to perform the role of $s(\vx)$.

Replacing the denoising module of the ADMM algorithm by an off-the-shelf denoising algorithm was first proposed by Bouman and colleagues \cite{Venkatakrishnan_Bouman_Wohlberg_2013}, to the best of my knowledge. Perhaps of the heuristic nature of the method, they call it the ``Plug-and-Play'' algorithm to stress that one can plug in any denoising algorithm and get the ADMM algorithm running. Under appropriate conditions on the denoising algorithm, one can prove the convergence of the plug-and-play \cite{Sreehari_Venkatakrishnan_Wohlberg_2015}.

In the context of compressive sensing \cite{Metzler_Maleki_Baraniuk_2014,Ma_Zhu_Baron_2015,Dar_Bruckstein_Elad_2015}, a similar version of the plug-and-play is also being studied. In \cite{Metzler_Maleki_Baraniuk_2014}, Baranuik and colleagues considered an approximate message passing (AMP) algorithm for recovering images. Recognizing that AMP also has a ``inverse module'' and a ``denoising module'', they replace the shrinkage step in the conventional AMP with an off-the-shelf denoising algorithm (BM3D in their paper). Again, under appropriate conditions of the denoising algorithm, they proved that the AMP converges.

\subsection{Contributions}
The focus of this paper is not to find weaker conditions under which ``plug-and-play'' converges. Rather, I like to address another equally important question: What is the original prior $s(\vx)$ if we choose a particular denoising algorithm?  Answering this problem is essential to understand this type of algorithms in general. To make the discussion concrete, I will focus on the class of symmetric smoothing filters \cite{Milanfar_2013b,Chan_Zickler_Lu_2015,Chan_Zickler_Lu_2016} which is broad enough to include many denoising methods such as bilateral filter, non-local means \cite{Buades_Coll_2005_Journal} and LARK \cite{Milanfar_2013a}, but at the same time also allows us to exploit matrix structures, e.g., the graph Laplacian \cite{Meyer_Shen_2012}. I call the new prior as an \emph{algorithm-induced prior} to reflect the algorithmic nature of the prior.

The rest of the paper is organized as follows. I will first setup the problem in Section~\ref{sec:problem}. Then, in Section~III, I will address the question about the original prior of the ADMM-induced algorithm, and discuss linkages with the conventional graph Laplacian prior. Experimental results are presented in Section IV, and a conclusion is given in Section V.

\section{Concept of Algorithm-Induced Prior}
\label{sec:problem}
\subsection{ADMM Algorithm}
To begin the discussion I will first briefly introduce the ADMM algorithm. Interested readers can read \cite{Boyd_Parikh_Chu_Peleato_Eckstein_2011} for additional technical details.

Given the constrained minimization \eref{eq:main 2}, the ADMM algorithm defines the augmented Lagranian function as
\begin{equation}
\calL(\vx,\vv,\vu) = f(\vx) + \lambda s(\vv) + \vu^T(\vx-\vv) + \frac{\rho}{2}\|\vx - \vv\|^2.
\label{eq:augmented lagrangian function}
\end{equation}
where $\vu \in \R^n$ is the Lagrange multiplier, and $\rho > 0$ is the half quadratic penalty parameter. The algorithm then proceeds to update each variable as follows
\begin{align}
\vx^{(k+1)} &= \argmin{\vx\in \R^n} \;\; \calL(\vx, \; \vv^{(k)}, \; \vu^{(k)}), \label{eq:ADMM1_x}\\
\vv^{(k+1)} &= \argmin{\vv\in \R^n} \;\; \calL(\vx^{(k+1)}, \; \vv, \; \vu^{(k)}), \label{eq:ADMM1_v}\\
\vu^{(k+1)} &= \vu^{(k)} + \rho(\vx^{(k+1)} - \vv^{(k+1)}). \label{eq:ADMM1_u}
\end{align}
The minimizations in \eref{eq:ADMM1_x} and \eref{eq:ADMM1_v} are known as the primal updates, whereas the descent step in \eref{eq:ADMM1_u} is the dual update. If both $f(\cdot)$ and $s(\cdot)$ are closed, proper and convex, and if $\calL(\cdot)$ has a saddle point, then one can prove convergence of the ADMM algorithm in terms of primal residue, primal objective and dual variable \cite{Boyd_Parikh_Chu_Peleato_Eckstein_2011}. In case when \eref{eq:ADMM1_x} and \eref{eq:ADMM1_v} are solved simultaneously instead of sequentially as presented above, then one will obtain the augmented Lagrangian method (ALM).

With some manipulations and rearrangement of terms we can show the following.
\begin{proposition}
\label{prop 1}
The iterations \eref{eq:ADMM1_x}-\eref{eq:ADMM1_u} are equivalent to
\begin{align}
\vx^{(k+1)} &= \argmin{\vx\in \R^n} \;\; f(\vx) +\frac{\rho}{2} \|\vx - \vxtilde^{(k)}\|^2, \label{eq:ADMM2,x}\\
\vv^{(k+1)} &= \argmin{\vv\in \R^n} \;\; \lambda s(\vv) + \frac{\rho}{2}\|\vv - \vvtilde^{(k)}\|^2,\label{eq:ADMM2,v}\\
\vubar^{(k+1)} &= \vubar^{(k)} + (\vx^{(k+1)} - \vv^{(k+1)}),\label{eq:ADMM2,u}
\end{align}
where $\vubar^{(k)} \bydef \frac{1}{\rho}\vu^{(k)}$ is the scaled multiplier, $\vxtilde^{(k)} \bydef \vv^{(k)}-\vubar^{(k)}$ and $\vvtilde^{(k)} \bydef \vx^{(k+1)}+\vubar^{(k)}$.
\end{proposition}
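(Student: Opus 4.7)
The plan is a direct computation: substitute the scaled dual $\vubar^{(k)} \bydef \vu^{(k)}/\rho$ into the augmented Lagrangian, complete the square in each primal subproblem so that the linear cross-term is absorbed into the existing quadratic penalty, and then drop the resulting $\vu$-dependent constants because they are independent of the variable being minimized. The dual update simply inherits a factor of $1/\rho$ on both sides.

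Concretely, I would begin with \eref{eq:ADMM1_x}. Dropping the $\lambda s(\vv^{(k)})$ term (which does not depend on $\vx$), the objective reduces to
\begin{equation*}
f(\vx) + (\vu^{(k)})^T(\vx - \vv^{(k)}) + \tfrac{\rho}{2}\|\vx-\vv^{(k)}\|^2.
\end{equation*}
Factoring out $\rho/2$ from the last two terms and completing the square using the identity $2\vec{a}^T\vec{b} + \|\vec{b}\|^2 = \|\vec{b}+\vec{a}\|^2 - \|\vec{a}\|^2$ with $\vec{a}=\vubar^{(k)}$ and $\vec{b}=\vx-\vv^{(k)}$, the objective becomes
\begin{equation*}
f(\vx) + \tfrac{\rho}{2}\bigl\|\vx - (\vv^{(k)}-\vubar^{(k)})\bigr\|^2 - \tfrac{\rho}{2}\|\vubar^{(k)}\|^2.
\end{equation*}
The final term is a constant in $\vx$, so discarding it and identifying $\vxtilde^{(k)} = \vv^{(k)} - \vubar^{(k)}$ recovers \eref{eq:ADMM2,x}. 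The same move applied to \eref{eq:ADMM1_v}, this time with $\vec{b}=\vx^{(k+1)}-\vv$, yields the quadratic penalty centered at $\vx^{(k+1)} + \vubar^{(k)}$, which is exactly $\vvtilde^{(k)}$, giving \eref{eq:ADMM2,v}.

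The dual update \eref{eq:ADMM1_u} is even more immediate: dividing both sides of $\vu^{(k+1)} = \vu^{(k)} + \rho(\vx^{(k+1)}-\vv^{(k+1)})$ by $\rho$ and applying the definition of $\vubar$ produces \eref{eq:ADMM2,u}. There is really no substantive obstacle here; the only thing to be careful about is the sign discipline in the completing-the-square step, so that the center of the quadratic in the $\vx$-subproblem comes out as $\vv^{(k)}-\vubar^{(k)}$ while the $\vv$-subproblem center comes out as $\vx^{(k+1)}+\vubar^{(k)}$ (the asymmetry arising from the sign of $\vv$ inside $\vx-\vv$). Once that bookkeeping is done, the equivalence of the two forms is a line-by-line identification.
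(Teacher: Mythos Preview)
Your proposal is correct and follows exactly the approach the paper indicates: the paper explicitly skips the proof, remarking that ``it is essentially completing squares,'' which is precisely the mechanism you carry out in detail. Your sign bookkeeping for the two centers $\vv^{(k)}-\vubar^{(k)}$ and $\vx^{(k+1)}+\vubar^{(k)}$ is right, and the dual-update rescaling is immediate.
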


The proof is skipped because it is essentially completing squares. The reason of rewriting the ADMM as above is to demonstrate the \emph{modular} structure of the ADMM algorithm which we shall discuss shortly. As a side remark, the iterations \eref{eq:ADMM2,x}-\eref{eq:ADMM2,u} can be defined as a \emph{proximal algorithm} \cite{Parikh_Boyd_2014}.

To gain more insights into the modular structure presented in Proposition~\ref{prop 1}, let us consider the following example.
\begin{example}
If we use $f(\cdot)$ and $s(\cdot)$ given in \eref{eq:example1}, we observe that \eref{eq:ADMM2,x} and \eref{eq:ADMM2,v} become
\begin{align*}
\vx^{(k+1)} &= \argmin{\vx} \;\; \|\mA\vx - \vy\|^2 + \frac{\rho}{2} \|\vx - \vxtilde^{(k)}\|^2, \\
\vv^{(k+1)} &= \argmin{\vv} \;\; \lambda \|\vx\|_{TV} + \frac{\rho}{2}\|\vv - \vvtilde^{(k)}\|^2,
\end{align*}
which is a reconstruction problem with a quadratic regularization, and a denoising problem (to denoise $\vvtilde^{(k)}$) with a total variation regularization, respectively.
\end{example}

%\subsection{Alternating Minimization (AM)}
%As a simplification of the ADMM algorithm, we can consider eliminating the updates of the Lagrange multiplier $\vu^{(k)}$ in \eref{eq:ADMM1_u}. This will return us an alternating minimization (AM) algorithm
%\begin{align}
%\vx^{(k+1)} &= \argmin{\vx\in \R^n} \;\; f(\vx) +\frac{\rho}{2} \|\vx - \vv^{(k)}\|^2, \label{eq:AM,x}\\
%\vv^{(k+1)} &= \argmin{\vv\in \R^n} \;\; \lambda s(\vv) + \frac{\rho}{2}\|\vv - \vx^{(k+1)}\|^2,\label{eq:AM,v}
%\end{align}
%which can be derived easily by setting $\vubar^{(k)} = 0$ for all $k$ in \eref{eq:ADMM2,x}-\eref{eq:ADMM2,v}. Going one step backward, we can also show that this AM algorithm actually solves a half-quadratic penalized problem:
%\begin{equation}
%\minimize{\vx \in \R^n, \, \vv \in \R^n} \;\; f(\vx) + \lambda s(\vv) + \frac{\rho}{2}\|\vx - \vv\|^2,
%\end{equation}
%of which the objective function is a special case of \eref{eq:augmented lagrangian function} with $\vu = 0$.
%
%Generally speaking, for the same functions $f(\cdot)$ and $s(\cdot)$ with the same parameters $\lambda$ and $\rho$, the ADMM algorithm converges faster than the AM algorithm. However, as will be shown in the experiment section (Section~XX), the optimal solutions of the ADMM algorithm and the AM algorithm are indeed not much different in terms of the image quality and the mean squared error with respect to the ground truth. Yet, what is more advantageous about AM is its significantly simpler convergence criteria than the ADMM which we shall discuss in Section~XX.

\subsection{Algorithm-Induced Prior}
Recognizing the ``denoising'' module in the ADMM algorithm, we replace the $\vv$-subproblem by a denoising algorithm. Formally, if we denote $\calD_{h}$ as the denoising algorithm, i.e.,
\begin{equation}
\calD_{h}\left(\vvtilde^{(k)} \right) \bydef \argmin{\vv\in \R^n} \;\; \lambda s(\vv) + \frac{\rho}{2}\|\vv - \vvtilde^{(k)}\|^2,
\end{equation}
where the subscript $h > 0$ specifies the internal parameter of the denoising algorithm, then the $\vv$-subproblem becomes
\begin{equation}
\vv^{(k+1)} = \calD_{h}\left(\vvtilde^{(k)} \right).
\end{equation}
For example, if we choose a symmetric smoothing filter, then the denoising algorithm $\calD_{h}: \; \R^n \rightarrow \R^n$ takes the form
\begin{equation}
\calD_{h}\left(\vvtilde^{(k)} \right) = \mW_h^{(k)} \vvtilde^{(k)},
\label{eq:symmetric smoothing filter}
\end{equation}
where $\mW_h^{(k)} \in \R^{n \times n}$ is a doubly stochastic matrix. An explicit example of $\mW_h^{(k)}$ using the non-local means is illustrated in the following example.

\begin{example}
Consider the non-local means \cite{Buades_Coll_2005_Journal} as an example. One can first construct a kernel matrix $\mK_h^{(k)}$ with the $(i,j)$-th entry
\begin{equation*}
\left[\mK_h^{(k)}\right]_{ij} \bydef \exp\left\{-\| \vvtilde^{(k)}_i - \vvtilde^{(k)}_j \|^2/ (2h^2)\right\},
\end{equation*}
where $\vvtilde^{(k)}_i$ denotes the $i$-th patch of the input $\vvtilde^{(k)}$. Then, by applying Sinkhorn-Knopp balancing algorithm \cite{Sinkhorn_Knopp_1967} one can determine a pair of diagonal matrices $\mR^{(k)}$ and $\mC^{(k)}$ such that
$$
\mW^{(k)}_h \bydef \mR^{(k)} \mK^{(k)} \mC^{(k)}
$$
is a doubly stochastic matrix.
\end{example}

\section{Analysis of Algorithm-Induced Prior}
In this section I will address the question: What is the original prior $s(\vx)$ if we choose $\calD_h$ as a symmetric smoothing filter? For notational simplicity I will drop the scripts $(\cdot)^{(k)}$ and $(\cdot)_h$.

\subsection{Original $s(\vx)$}
The first main result is stated in the following proposition, which provides an explicit formula for the regularization $s(\vx)$ when symmetric smoothing filters are used.

\begin{proposition}
\label{prop: s}
Let $\mW$ be a symmetric smoothing filter with $\mathrm{rank}(\mW) = m$, and let $\mW^+$ be the pseudo-inverse of $\mW$, i.e., $\mW^+ = \mU\mSigma_+^{-1}\mU^T$ where $\mU$ and $\mSigma = \diag{s_1,\ldots,s_m,0,\ldots,0}$ are the eigenvectors and eigenvalues of $\mW$ respectively, and $\mSigma_+^{-1} = \diag{1/s_1,\ldots,1/s_m,0,\ldots,0}$. For a fixed $\mW$, if
\begin{equation}
s(\vv) = \frac{\rho}{2\lambda} \vv^T (\mI-\mW)\mW^{+} \vv,
\label{eq: nlm s}
\end{equation}
then
\begin{equation}
\argmin{\vv} \;\; \lambda s(\vv) + \frac{\rho}{2}\|\vv - \vvtilde\|^2 = \mW\vvtilde \;\bydef\; \vvhat.
\label{eq: nlm v sub}
\end{equation}
\end{proposition}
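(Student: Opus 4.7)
The plan is to verify the claim by direct calculation: substitute the prescribed $s(\vv)$ into the objective of \eref{eq: nlm v sub}, derive the first-order optimality condition, and check that $\mW\vvtilde$ satisfies it. Throughout I would lean on the eigendecomposition $\mW = \mU\mSigma\mU^T$ to diagonalize everything.

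First, I would plug the formula for $s(\vv)$ into the objective to obtain
\begin{equation*}
F(\vv) = \frac{\rho}{2}\,\vv^T(\mI-\mW)\mW^{+}\vv + \frac{\rho}{2}\|\vv - \vvtilde\|^2.
\end{equation*}
Since $\mW$ and $\mW^+$ share the eigenbasis $\mU$, the matrix $(\mI-\mW)\mW^+$ is symmetric, so the gradient is simply $\rho[(\mI-\mW)\mW^+ + \mI]\vv - \rho\vvtilde$, and the optimality condition reads $[(\mI-\mW)\mW^+ + \mI]\vv = \vvtilde$.

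Next, I would change variables via $\vv' = \mU^T\vv$, $\vvtilde' = \mU^T\vvtilde$, which diagonalizes the matrix $(\mI-\mW)\mW^+ + \mI$. For $i \le m$ its eigenvalues are $(1-s_i)(1/s_i) + 1 = 1/s_i$, and for $i>m$ they are $1$. The normal equations decouple into $v'_i/s_i = \tilde v'_i$ for $i \le m$ and $v'_i = \tilde v'_i$ for $i>m$, yielding $v'_i = s_i \tilde v'_i$ on the range of $\mW$. Transforming back, the range component reassembles to $\mU\mSigma\mU^T\vvtilde = \mW\vvtilde$, which is the claimed formula. Convexity of $F$ — inherited from positive semidefiniteness of $(\mI-\mW)\mW^+$, which holds whenever the nonzero eigenvalues of $\mW$ lie in $(0,1]$ as is standard for doubly stochastic symmetric smoothing filters — then promotes this critical point to the unique global minimizer.

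The step I expect to require the most care is the null-space component of $\vvtilde$. The regularizer $\vv^T(\mI-\mW)\mW^+\vv$ places no penalty on vectors in $\ker(\mW)$, so a naive solve of the normal equations produces an extra $(\mI - \mW\mW^+)\vvtilde$ summand alongside $\mW\vvtilde$. Recovering the stated identity requires observing that in the ADMM iteration $\vvtilde$ is produced by a previous application of $\mW$ in $\vx$-$\vv$-$\vubar$ updates, so $\vvtilde \in \mathrm{range}(\mW)$ and the extra term vanishes; alternatively, one can argue that the minimization is effectively restricted to $\mathrm{range}(\mW)$. Once this is in place, the remaining algebra is just the routine use of the Moore--Penrose identities $\mW\mW^+\mW = \mW$ and $\mW^+\mW\mW^+ = \mW^+$.
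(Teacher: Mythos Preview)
Your direct-verification strategy is precisely the paper's first proof: substitute $s(\vv)$, differentiate, and read off the minimizer. The paper also offers a second, ``constructive'' route---positing $s(\vv)=\alpha\,\vv^T\mC\vv$ and back-solving for $\mC$ from the condition $\mW(\mC+\mI)=\mI$---but your eigendecomposition already supplies more detail than either sketch in the paper.

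You are right to flag the null-space issue, and on this point you are being more careful than the paper itself. When $\mW$ is genuinely rank-deficient and $\vvtilde\notin\mathrm{range}(\mW)$, the unconstrained minimizer is $\mW\vvtilde+(\mI-\mW\mW^{+})\vvtilde$, not $\mW\vvtilde$; the proposition as stated tacitly assumes full rank or $\vvtilde\in\mathrm{range}(\mW)$. Your proposed patch, however, does not work: in the ADMM iteration $\vvtilde^{(k)}=\vx^{(k+1)}+\vubar^{(k)}$, and neither summand is produced by applying $\mW$, so there is no mechanism forcing $\vvtilde^{(k)}\in\mathrm{range}(\mW)$. The honest resolution is simply to note that the paper treats this loosely (its constructive proof even demands $\mW(\mC+\mI)=\mI$, which has no solution for rank-deficient $\mW$), and that the claimed identity holds exactly when $\mW$ has full rank, or else modulo the null-space component of $\vvtilde$.
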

\begin{proof}
There are two ways of proving this proposition. The first way is a ``reverse engineering'' approach. By plugging \eref{eq: nlm s} into \eref{eq: nlm v sub} and setting the first order derivative to zero we can show that $\vvhat = \mW\vvtilde$.

The alternative proof is a constructive one. First, we observe that in order to obtain $\mW\vvtilde$ on the right hand side of \eref{eq: nlm v sub}, we must have $s(\vv)$ being quadratic. Therefore, we let
\begin{equation*}
s(\vv) = \alpha \vv^T \mC \vv,
\end{equation*}
for some symmetric matrix $\mC$ and constant $\alpha$. Taking the first order derivative of the resulting function yields
\begin{align*}
\frac{d}{d\vv} \left( \lambda s(\vv) + \frac{\rho}{2}\|\vv - \vvtilde\|^2 \right)= 2 \lambda \alpha \mC\vv + \rho(\vv-\vvtilde) = 0.
\end{align*}
Rearranging the terms, we obtain a linear equation
\begin{equation*}
\left( \frac{2\lambda}{\rho} \alpha \mC + \mI\right)\vv = \vvtilde.
\end{equation*}
Since $\alpha$ can be arbitrary, we set $\alpha = \rho/(2\lambda)$. Consequently, we have $(\mC+\mI)\vv = \vvtilde$. Multiplying both sides with $\mW$ yields $\mW(\mC+\mI)\vv = \mW\vvtilde$. Thus, in order to obtain $\vv = \mW\vvtilde$, $\mC$ must be chosen such that
\begin{equation*}
\mW(\mC+\mI) = \mI,
\end{equation*}
which gives $\mC = (\mI-\mW)\mW^+$.
\end{proof}

There are a few important properties of the regularization $s(\vv)$ shown in \eref{eq: nlm s}. First, for any fixed $\mW$, the matrix $(\mI-\mW)\mW^+$ is symmetric positive semidefinite. In fact, since the eigenvalues of a symmetric smoothing filter $\mW$ is always bounded between 0 and 1, i.e., $0 \preceq \mSigma \preceq 1$, it holds that the eigenvalues of $(\mI-\mW)\mW^+$ is also bounded between 0 and 1. Therefore, if $\mW$ is pre-defined before running the ADMM algorithm, then $s(\vx)$ is convex and hence the overall optimization is also convex.

Second, if we compare \eref{eq: nlm s} with the conventional graph Laplacian regularization $s(\vv) = \vv^T \mL \vv$ in the literature \cite{Shuman_Narang_Frossard_2013}, where $\mL \bydef \mI - \mW$, we observe that \eref{eq: nlm s} has an additional term $\mW^+$. Using a graph signal processing terminology, we can view $\mW$ as a lowpass filter and $\mL$ is a highpass filter. $\mW^+$ is a bandpass filter because of the truncation property of the pseudo-inverse. Therefore, the regularization $\vv^T(\mI-\mW)\mW^+\vv$ penalizes a smaller (but more focused) set of graph frequencies than the conventional regularization $\vv^T\mL\vv$. In Section IV we will compare the performance.

\subsection{Closed-form Solution}
Proposition \ref{prop: s} suggests a new prior which deserves a closer look. First of all, assume, for simplicity, that the matrix $\mW$ is \emph{fixed} throughout the ADMM iteration. This can be done either in an oracle setting (i.e., find $\mW$ from the ground truth solution), or in a pre-filtering setting (i.e., find $\mW$ from some initial guess of the solution). Both ways are common in image restoration \cite{Talebi_Milanfar_2014,Talebi_Zhu_Milanfar_2013}.

Substituting $f(\vx) = \frac{1}{2}\|\mA\vx-\vy\|^2$ and the specific prior $s(\vx)$ given by \eref{eq: nlm s} into the original optimization \eref{eq:main 1}, the problem becomes
\begin{equation}
\minimize{\vx \in \R^n} \;\; \varphi(\vx) \bydef \frac{1}{2}\|\mA\vx-\vy\|^2 + \frac{\rho}{2} \vx^T (\mI-\mW)\mW^{+} \vx,
\label{eq:main new}
\end{equation}
which is a quadratic optimization. Closed-form solution of \eref{eq:main new} exists, and is given by solving the normal equation
\begin{equation}
\left(\mA^T\mA + \rho(\mI-\mW)\mW^{+} \right)\vx = \mA^T\vy.
\label{eq:main new solution}
\end{equation}

Since closed-form solution exists, it is possible to bypass the ADMM iterations and obtain the solution efficiently. However, from a computational perspective, there are two issues of \eref{eq:main new solution} which we need to overcome. First, \eref{eq:main new solution} involves inverting an $n \times n$ matrix which is computationally prohibitive for large $n$. Second, if $\mW$ has a full rank but with some very small eigenvalues, $\mW^{+}$ will cause numerical instability, depending on the numerical threshold for truncating the eigenvalues. Therefore, if we want to use the closed form solution in \eref{eq:main new solution}, one possible approach is to bypass the pseudo-inverse $\mW^+$. This can be done using the following algebraic trick.

\begin{proposition}
\label{prop 3}
The solution of \eref{eq:main new} is given by
\begin{equation}
\vx = \mU\mSigma\left(\mSigma\mU^T\mA^T\mA\mU\mSigma  + \rho \mSigma(\mI-\mSigma)\right)^{+} \mSigma\mU^T\mA^T\vy,
\label{eq:main new solution 2}
\end{equation}
where $\mW = \mU\mSigma\mU^T$ is the eigen-decomposition of $\mW$.
\end{proposition}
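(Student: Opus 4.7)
The plan is to derive \eref{eq:main new solution 2} by reducing the normal equation \eref{eq:main new solution} to a system that avoids the pseudo-inverse $\mW^+$ entirely. First I would insert the eigendecompositions $\mW = \mU\mSigma\mU^T$ and $\mW^+ = \mU\mSigma^+\mU^T$ into \eref{eq:main new solution}, so that the equation becomes
\[
\bigl(\mA^T\mA + \rho\,\mU(\mI-\mSigma)\mSigma^+\mU^T\bigr)\vx = \mA^T\vy.
\]

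The pivotal step is the change of variables $\vx = \mU\mSigma\vz$ for an auxiliary $\vz \in \R^n$. This ansatz is natural both because the claimed formula \eref{eq:main new solution 2} is explicitly of this form, and because the regularizer $\vx^T(\mI-\mW)\mW^+\vx$ is insensitive to components of $\vx$ in $\mathrm{null}(\mW)$, so restricting the search to $\mathrm{range}(\mW)$ loses nothing. Under the ansatz, the identity $\mSigma^+\mSigma = \diag{1,\ldots,1,0,\ldots,0}$ together with $(\mI-\mSigma)\mSigma^+\mSigma = \mSigma^+\mSigma - \mSigma$ collapses the regularization contribution to $\rho\,\mU(\mSigma^+\mSigma-\mSigma)\vz$.

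I would then left-multiply the entire equation by $\mSigma\mU^T$. Using $\mU^T\mU = \mI$ and $\mSigma\mSigma^+\mSigma = \mSigma$, the data term becomes $\mSigma\mU^T\mA^T\mA\mU\mSigma\vz$, the regularization term becomes $\rho\,\mSigma(\mI-\mSigma)\vz$ (after absorbing $\mSigma(\mSigma^+\mSigma-\mSigma) = \mSigma - \mSigma^2$), and the right-hand side becomes $\mSigma\mU^T\mA^T\vy$. Calling the bracketed matrix $\mB = \mSigma\mU^T\mA^T\mA\mU\mSigma + \rho\,\mSigma(\mI-\mSigma)$ and applying the Moore--Penrose inverse yields $\vz = \mB^+\mSigma\mU^T\mA^T\vy$, after which pulling back by $\vx = \mU\mSigma\vz$ reproduces \eref{eq:main new solution 2}.

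The step I expect to require the most care is the legitimacy of the substitution $\vx = \mU\mSigma\vz$ and the subsequent left-multiplication by $\mSigma\mU^T$: because $\mSigma$ has a nontrivial kernel whenever $\mW$ is low rank, these two operations in principle drop the coordinates of the normal equation that live in $\mathrm{null}(\mW)$. To close the argument I would verify by direct substitution that \eref{eq:main new solution 2} satisfies the original normal equation; the dropped coordinates are annihilated by $(\mI-\mW)\mW^+$, so the formula singles out the particular solution of \eref{eq:main new} whose support lies in $\mathrm{range}(\mW)$. This also clarifies why the expression is numerically preferable to \eref{eq:main new solution}: the ill-conditioned inversion of small eigenvalues of $\mW$ is replaced by a pseudo-inverse of $\mB$ whose zero eigenvalues now come from the \emph{known} null directions of $\mSigma$ rather than from truncation of small spectral values of $\mW$.
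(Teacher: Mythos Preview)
Your approach is essentially the same as the paper's: both hinge on the change of variables $\vx = \mU\mSigma\vz$ (the paper writes this as $\vx = \mW\vp$ followed by $\vq = \mU^T\vp$, which is the same thing). The only presentational difference is that the paper substitutes directly into the objective $\varphi$ and then takes first-order conditions in the new variable, so the left-multiplication by $\mSigma\mU^T$ that you perform explicitly arises automatically from the chain rule; your extra care about the $\mathrm{null}(\mW)$ coordinates is a point the paper simply glosses over.
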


\begin{proof}
Define $\vp \bydef \mW^{+}\vx$ (or, equivalently, $\vx = \mW\vp$). Then it holds that
\begin{align*}
\varphi(\vp) = \frac{1}{2}\|\mA\mW\vp-\vy\|^2 + \frac{\rho}{2} \vp^T \mW(\mI-\mW)\vp,
\end{align*}
because $\mW = \mW^T$. Consider the eigen-decomposition $\mW = \mU \mSigma \mU^T$, and let $\vq = \mU^T\vp$, it follows that
\begin{align*}
\varphi(\vq) = \frac{1}{2}\|\mA\mU\mSigma\vq-\vy\|^2 + \frac{\rho}{2} \vq^T \mSigma(\mI-\mSigma)\vq.
\end{align*}
The minimizer of this quadratic function is given by the solution of the normal equation
\begin{equation*}
\left(\mSigma\mU^T\mA^T\mA\mU\mSigma  + \rho \mSigma(\mI-\mSigma)\right) \vq = \mSigma\mU^T\mA^T\vy.
\end{equation*}
Since $\mU\mU^T =\mI$, it holds that $\vp = \mU\vq$ and hence the solution is $\vx = \mW\vp = \mW\mU\vq = \mU\mSigma\vq$.
\end{proof}

The importance of Proposition~\ref{prop 3} is that \eref{eq:main new solution 2} only involves one pseudo-inverse whereas \eref{eq:main new solution} requires two pseudo-inverses. Computing \eref{eq:main new solution 2} is numerically easy. If $\mathrm{rank}(\mW) = m$, we decompose $\mW = \mV\mS\mV^T$ where $\mV \in \R^{n \times m}$ is the truncated eigenvector, and $\mS \in \R^{m \times m}$ is the truncated eigenvalue. The matrices $\mV$ and $\mS$ can be computed using the Nystr\"{o}m approximation \cite{Talebi_Milanfar_2014}. Consequently, \eref{eq:main new solution 2} becomes
\begin{equation}
\vx = \mV\mS\left(\mS\mV^T\mA^T\mA\mV\mS  + \rho \mS(\mI-\mS)\right)^{-1} \mS\mV^T\mA^T\vy.
\label{eq:main new solution 3}
\end{equation}
Inspecting \eref{eq:main new solution 3}, we observe that the matrix inversion only involves an $m \times m$ matrix, which is significantly smaller than the $n \times n$ matrix in \eref{eq:main new solution}. The matrix $\mA\mV$ are usually not difficult to evaluate. Below are two examples for image inpainting and deblurring.

\begin{figure*}[t]
\begin{tabular}{cccc}
\includegraphics[width=0.2\linewidth]{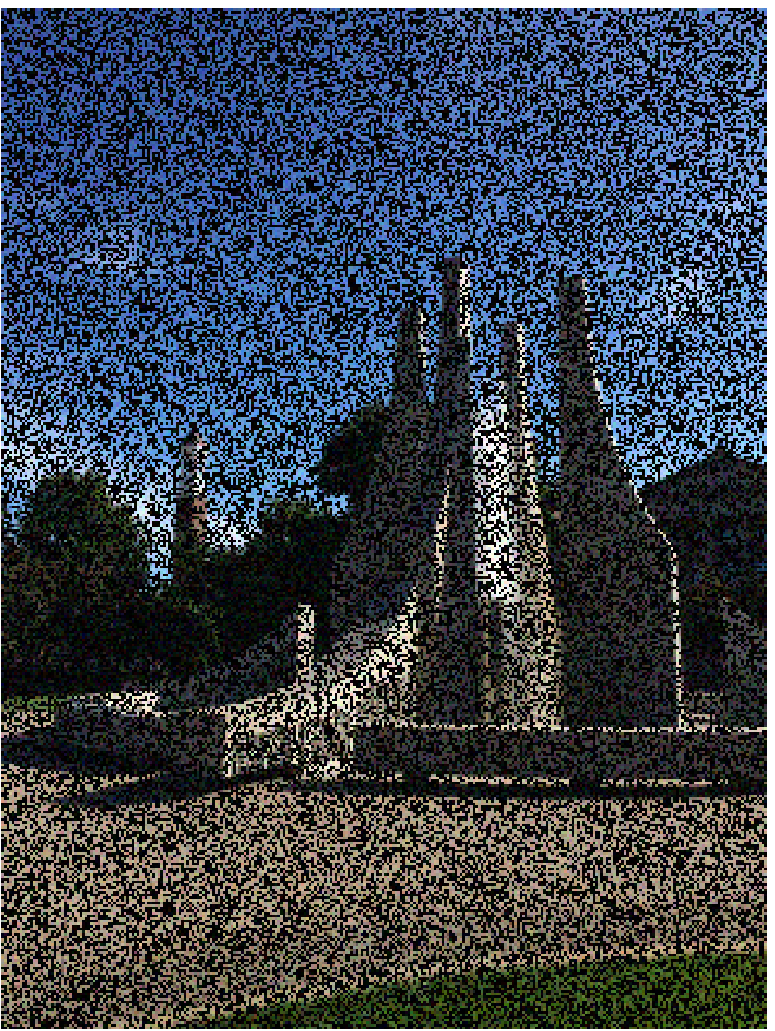}&
\includegraphics[width=0.2\linewidth]{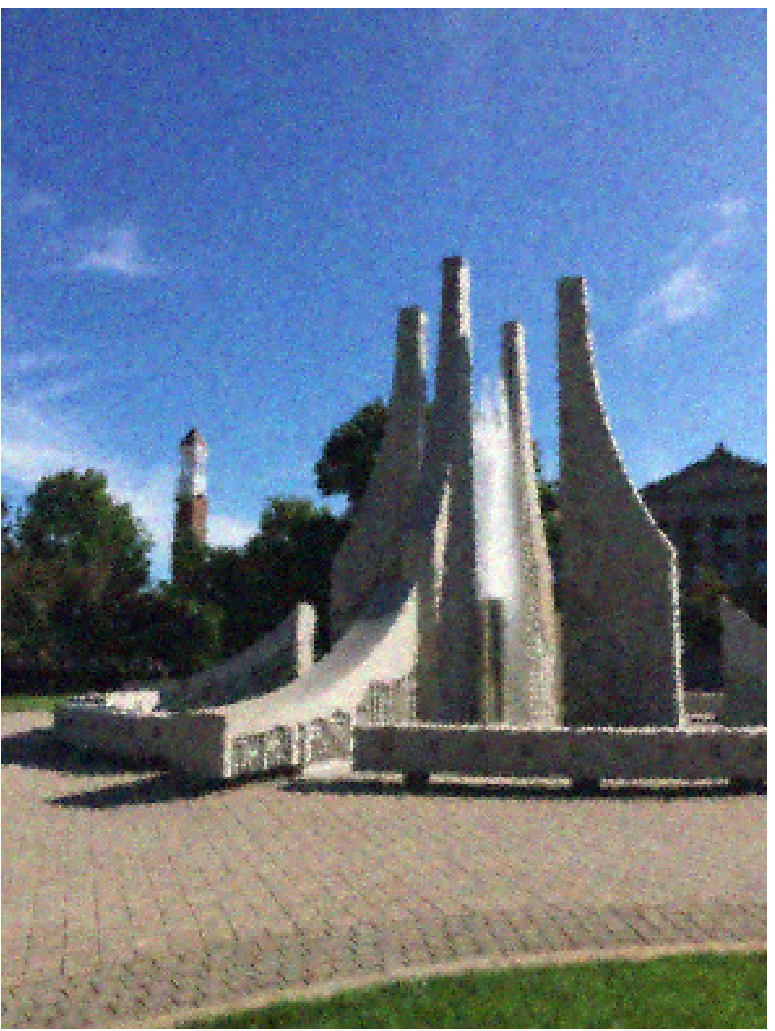}&
\includegraphics[width=0.2\linewidth]{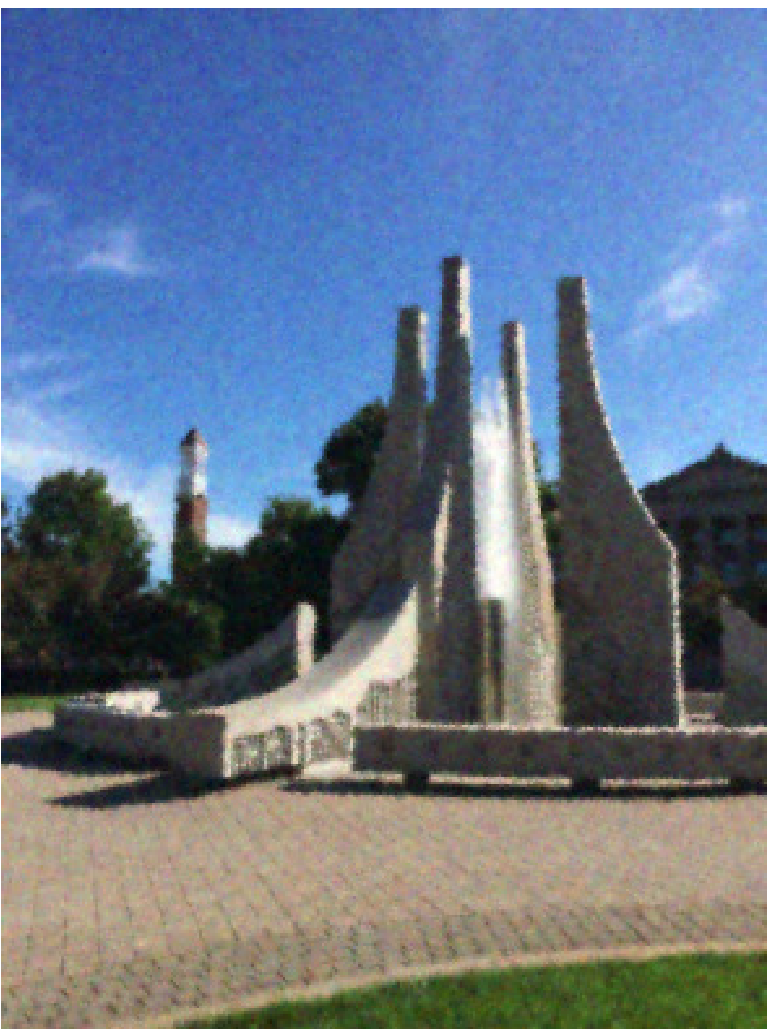}&
\includegraphics[width=0.3\linewidth]{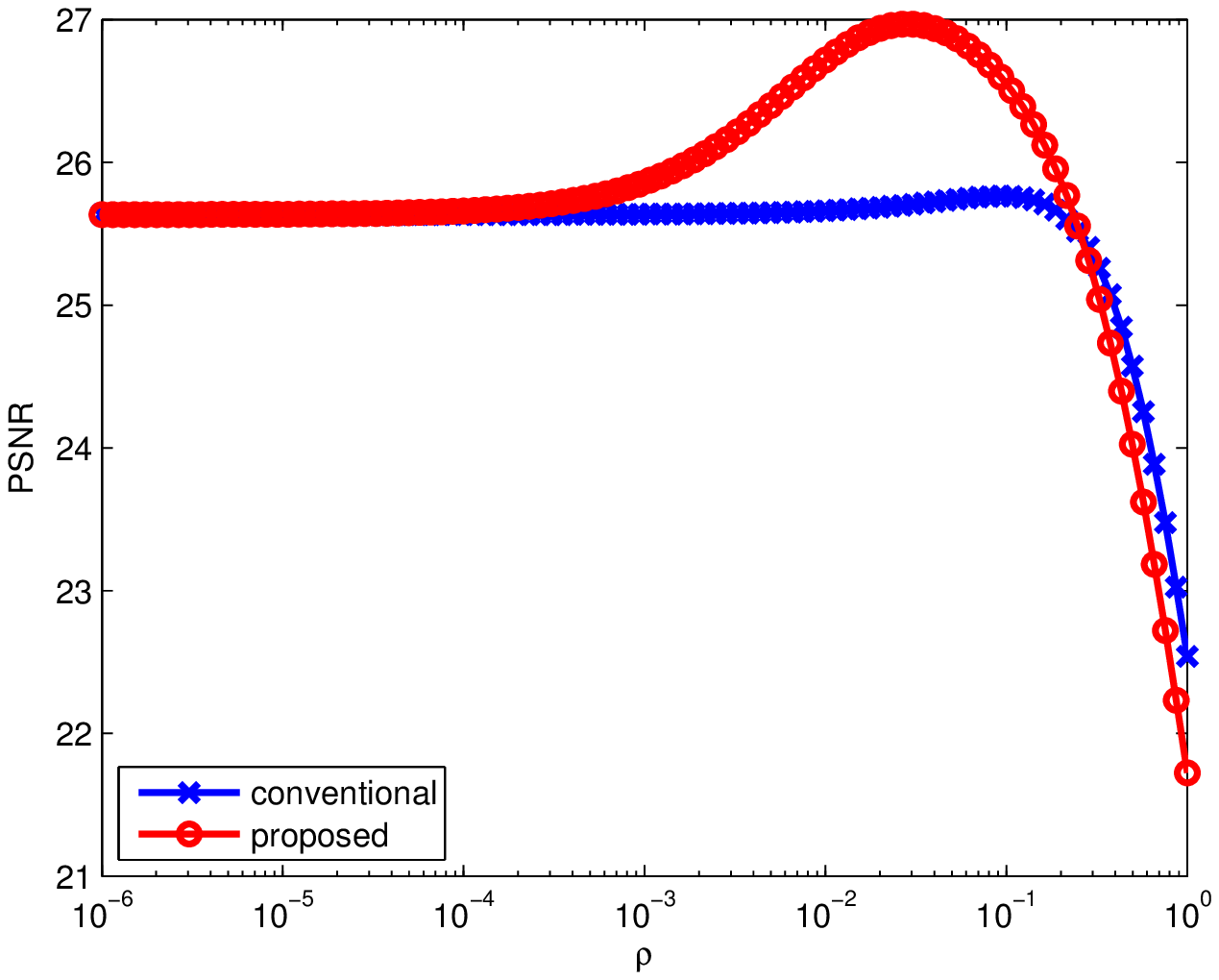}\\
(a) 50\% missing pixel & (b) conventional Laplacian & (c) proposed & (d) PSNR as a function of $\rho$
\end{tabular}
\caption{Image inpainting using (b) the conventional graph Laplacian prior $s(\vx) = \frac{\rho}{2} \vx^T(\mI-\mW)\vx$ and (c) the proposed algorithm-induced prior $s(\vx) = \frac{\rho}{2}\vx^T(\mI-\mW)\mW^{+}\vx$. For both methods, an optimal $\rho$ is selected for the best PSNR as shown in (d). The matrix $\mW$ is the non-local mean weight computed from an initial solution using Shepard's interpolation \cite{Shepard_1968}. In this example, the RGB color channels are processed independently. }
\label{fig:result}
\end{figure*}

%For example, for image inpainting, the matrix $\mA$ is a binary diagonal matrix. Hence, $\mA\mV$ involves picking the non-zero columns of $\mV$.

%Below is an example for image inpainting.

\begin{example}
For image inpainting, the matrix $\mA$ is a binary diagonal matrix. Hence, $\mA\mV$ involves picking the non-zero columns of $\mV$.
\end{example}

\begin{example}
For image deblurring, the matrix $\mA$ is a convolution. Hence, the multiplication of $\mA$ and $\vv_i$, the $i$th column of $\mV$, is a blurring operation on $\vv_i$.
\end{example}

\vspace{2ex}

\begin{remark}
In practice, symmetric smoothing filters sometimes have very narrow spatial support and hence $\mW$ is a banded diagonal matrix. A banded diagonal $\mW$ has a significantly higher rank, making the eigen-decomposition difficult. However, the good news is that such $\mW$ is often easy to compute. In this case, the closed form should be replaced by the ADMM iteration.
\end{remark}

\section{Experimental Results}
To demonstrate the usefulness of the algorithm-induced prior, I use image inpainting as an example. In this experiment, let $\mA \in \R^{n \times n}$ be a binary diagonal matrix, where $A_{ii} \sim \mathrm{Bernoulli}(\xi)$ for a sampling ratio $0 \le \xi \le 1$. Here, $\xi = 1$ means that all pixels are acquired with probability 1, whereas $\xi = 0$ means a probability 0. The observed image is
\begin{equation*}
\vy = \mA\vx + \veta,
\end{equation*}
where $\veta \sim \calN(0,\sigma^2)$ is an additive iid Gaussian noise. In this experiment, $\sigma = 0.05$.

There are two choices of the filter $\mW$. The first choice is to compute $\mW$ from the ground truth solution. This is called the oracle setting, and is the best possible setting we can use under our framework. The second choice is to compute $\mW$ from some initial estimate of the solution. In this problem, the initial estimate is performed using the classical Shepard's interpolation method \cite{Shepard_1968}. If a more sophisticated initial estimator is used, it is likely that the performance will be improved.

We compare two graph Laplacian priors, namely
\begin{align}
\mbox{conventional:} &\quad\quad\quad\quad s_{\mL}(\vx) = \frac{\rho}{2} \vx^T\mL\vx, \notag \\
\mbox{proposed:}     &\quad\quad\quad\quad s_{\mC}(\vx) = \frac{\rho}{2} \vx^T\mC\vx, \label{eq:priors}
\end{align}
where $\mL \bydef \mI-\mW$ is the classical graph Laplacian, and $\mC \bydef (\mI-\mW)\mW^{+}$ is the proposed algorithm-induced prior. The results are shown in Table~\ref{table:psnr}. It is evident from the result that $s_{\mC}(\vx)$ performs consistently better than $s_{\mL}(\vx)$ for both the estimated $\mW$ and the oracle $\mW$. The gap is especially prominent if we look at the oracle case at 80\% missing.

\begin{table}[t]
\begin{tabular}{c|cc|cc}
\hline
\%              & \multicolumn{2}{c|}{estimated $\mW$}  & \multicolumn{2}{c}{oracle $\mW$} \\
missing         & conventional    & proposed        & conventional    & proposed \\
\hline
20\%             & 26.24dB           & 27.59dB              & 44.48dB        & 47.22dB\\
40\%             & 24.87dB           & 25.82dB              & 43.36dB        & 46.44dB\\
60\%             & 23.72dB           & 24.32dB              & 41.04dB        & 44.97dB\\
80\%             & 21.56dB           & 21.85dB              & 37.36dB        & 42.57dB\\
\hline
\end{tabular}
\caption{PSNR of ``\texttt{cameraman}'' with optimized $\rho$.}
\label{table:psnr}
\end{table}

\fref{fig:result} shows a visual comparison of an image captured by an i-Phone 6 camera with 50\% missing pixel generated by MATLAB simulation. It should be reminded that in all experiments the parameter $\rho$ are adjusted accordingly for $s_{\mL}(\vx)$ and $s_{\mC}(\vx)$. \fref{fig:result}(d) illustrates such dependence: The optimal $\rho$ are different for different priors. However, the best PSNR of $s_{\mC}(\vx)$ is significantly higher than that of $s_{\mL}(\vx)$.

\section{Conclusion}
Algorithm-induced prior is a strong performing but intriguing prior that we have little understanding about. Therefore, being able to explicitly write down the formula of the algorithm-induced prior is an important step which allows us to analyze the performance of such prior. In this paper, I demonstrated the case of symmetric smoothing filters and drew connections with the conventional graph Laplacian prior. On a set of image inpainting experiments, algorithm-induced prior offers consistently better results than the conventional graph Laplacian. As we progress along this direction, I believe that the interplay between the objective function and the denoising procedure should be studied in greater details.

\section{Acknowledgement}
I like to thank Charles Bouman for pointing me to the problem, and Suhas Sreehari for fruitful discussions. I also thank Dror Baron for telling me his work on AMP using BM3D.

%\newpage
\balance
\bibliographystyle{IEEEbib}
\bibliography{refs}

\end{document}